  \newcommand\figcaption{\def\@captype{figure}\caption}
  \newcommand\tabcaption{\def\@captype{table}\caption}
\def\0{{\bf 0}}
\def\1{{\bf 1}}
\title{Intervention Generative Adversarial Networks}
\author{\name Jiadong Liang\footnotemark[1] \\
\addr Center for Data Science, Peking University \\
\texttt{jdliang@pku.edu.cn}
\AND
\name Liangyu Zhang\footnotemark[1] \\
\addr Center for Data Science, Peking University \\
\texttt{zhangliangyu@pku.edu.cn}
\AND
\name Cheng Zhang\footnotemark[2]\\
\addr School of Mathematical Sciences, Peking University \\
\texttt{chengzhang@math.pku.edu.cn}
\AND
Zhihua Zhang \\
\addr School of Mathematical Sciences, Peking University \\
\texttt{zhzhang@math.pku.edu.cn} \\ 
}
\begin{document}

\maketitle
\renewcommand{\thefootnote}{\fnsymbol{footnote}}
\footnotetext[1]{These authors contributed equally to this work.}
\footnotetext[2]{Corresponding author.}

\begin{abstract}%
In this paper we propose a novel approach for stabilizing the training process of Generative Adversarial Networks as well as alleviating the mode collapse problem.
The main idea is to introduce a regularization term that we call  \textit{intervention loss} into the objective.  We refer to the resulting generative model as \textit{Intervention Generative Adversarial Networks} (IVGAN). 
By perturbing the latent representations of real images obtained from an auxiliary encoder network with Gaussian invariant interventions and penalizing the dissimilarity of the distributions of the resulting generated images, the intervention loss provides more informative gradient for the generator, significantly improving GAN's training stability. 
We demonstrate the effectiveness and efficiency of our methods via solid theoretical analysis and thorough evaluation on standard real-world datasets as well as the stacked MNIST dataset.
\end{abstract}

\section{Introduction}

As one of the most important advances in generative models in recent years, Generative Adversarial Networks (GANs) \citep{DBLP:conf/nips/GoodfellowPMXWOCB14} have been attracting great attention in the machine learning community.
GANs aim to train a generator network that transforms simple vectors of noise to produce ``realistic'' samples from the data distribution.
In the basic training process of GANs, a discriminator and a target generator are trained in an adversarial manner. The discriminator tries to distinguish the generated fake samples from the real ones, and the generator tries to fool the discriminator into believing the generated samples to be real.  

Although successful, there are two major challenges in training GANs: the instability of the training process and the mode collapse.
To deal with these problems, one class of approaches focus on designing more informative objective functions (\citep{salimans2016improved}, \citep{DBLP:journals/corr/MaoLXLW16}, \citep{kodali2018convergence}, \citep{arjovsky1701towards}, \citep{arjovsky2017wasserstein}, \citep{gulrajani2017improved}, \citep{zhou2019lipschitz}).
For example, \citep{DBLP:journals/corr/MaoLXLW16} proposed \emph{Least Squares GAN} (LSGAN) that uses the least squares loss to penalize the outlier point more harshly.
\citep{arjovsky1701towards} discussed the role played by the Jensen-Shannon divergence in the GAN training and suggested to use the Wasserstein distance instead. Accordingly, WGAN\citep{arjovsky2017wasserstein} and WGAN-GP\citep{gulrajani2017improved} have been proposed,
which greatly mitigate the problem of unstable training and mode collapse.
Other approaches enforce proper constraints on latent space representations to better capture the data distribution (\citep{DBLP:journals/corr/MakhzaniSJG15}, \citep{DBLP:journals/corr/LarsenSW15}, \citep{che2016mode}, \citep{tran2018dist}).
A representative work is the \textit{Adversarial Autoencoders} (AAE)\citep{DBLP:journals/corr/MakhzaniSJG15} which uses the discriminator to distinguish the latent representations generated by encoder from Gaussian noise. 
\citep{DBLP:journals/corr/LarsenSW15} employed image representation in the discriminator as the reconstruction basis of a VAE.
Their method turns pixel-wise loss to feature-wise, which can capture the real distribution more simply when some form of invariance is induced.
Different from VAE-GAN, \citep{che2016mode} regarded the encoder as an auxiliary network, which can promote GANs to pay much attention on missing mode and derive an objective function in a form similar to VAE-GAN.

In this paper we propose a novel technique to improve the training of most GAN models as well as the quality of generated images.
The core of our approach is to define a regularization term based on the latent representations of real images generated by an encoder network.
More specifically, we introduce auxiliary intervention operations that preserve the standard Gaussian (e.g., the noise distribution) to these latent representations.
The perturbed latent representations are then fed into the generator to produce \emph{intervened} samples.
We then introduce a classifier network to identify the right intervention operations that would have led to these intervened samples.
The resulting negative cross-entropy loss is added as a regularizer to the objective when training the generator.
We call this regularization term the \textit{intervention loss} and our approach \emph{InterVention Generative Adversarial Nets} (IVGAN).

We theoretically prove that the intervention loss is equivalent with the JS-divergence among multiple intervened distributions.
Most importantly, these intervened distributions interpolate between the original generative distribution of GAN and the data distribution, allowing useful information for the generator that is previously unavailable in original GAN models (see a thorough analysis on a toy example in Example \ref{exm:1}).
We show empirically that our model can be trained efficiently by utilizing the parameter sharing strategy between the discriminator and the classifier.
The models trained on the MNIST, CIFAR-10, LSUN and STL-10 datasets successfully generate diverse, visually appealing objects,  outperforming state-of-the-art baseline methods such as WGAN-GP in terms of the \textit{Fr\`echet Inception Distance} (FID) (proposed in \citep{heusel2017gans}).
We also perform a series of experiments on the stacked MNIST dataset and the results show that our proposed method can also effectively alleviate the mode collapse problem. Moreover, an ablation study is conducted, which validates the effectiveness of the proposed intervention loss.

In summary, our work offers three major contributions as follows. (\romannumeral1) We propose a novel method that can improve GAN's training as well as generating performance. (\romannumeral2) We theoretically analyze our proposed model and give insights on how it makes the gradient of generator more informative and thus stabilizes GAN's training. (\romannumeral3) We evaluate the performance of our method on both standard real-world datasets and the stacked MNIST dataset by carefully designed expriments, showing that our approach is able to stabilize GAN's training as well as improve the quality and diversity of generated samples.

\section{Preliminaries}
\label{gen_inst}
\paragraph{Generative adversarial nets}
The basic idea of GAN is to utilize a discriminator to continuously push a generator to map Gaussian noise to samples drawn according to an implicit data distribution. The objective function of the vanilla GAN takes the following form:
\begin{equation}
\min\limits_{G}\max\limits_{D} \Big\{V(D,G) \triangleq \mathbb{E}_{x\sim p_{data}}\log (D(x))+\mathbb{E}_{z\sim p_z}\log (1-D(G(z)))\Big\}, \label{vanilla GAN loss}
\end{equation}
where $p_z$ is a prior distribution (e.g., the standard Gaussian). It can be easily seen that when the discriminator reaches its optimum, that is, $D^\ast(x) = \frac{p_{data}(x)}{p_{data}(x)+p_G(x)}$, the objective is equivalent to the Jensen-Shannon (JS) divergence between the generated distribution $p_G$ and data distribution
$p_{data}$:
\[
{JS}(p_G\|p_{data}) \triangleq \frac{1}{2}\left\{{KL}(p_G\|\frac{p_G+p_{data}}{2})+ {KL}(p_{data}\|\frac{p_G+p_{data}}{2})\right\}.
\]
Minimizing this JS divergence guarantees that the generated distribution converges to the data distribution given adequate model capacity.

\paragraph{Multi-distribution JS divergence}
The JS divergence between two distributions $p_1$ and $p_2$ can be rewritten as
\[
    {JS}(p_1\|p_2) =H(\frac{p_1+p_2}{2})-\frac{1}{2}H(p_1)-\frac{1}{2}H(p_2),
\]
where $H(p)$ denotes the entropy of distribution $p$. We observe that the JS-divergence can be interpreted as the entropy of the mean of the two distribution minus the mean of two distribution's entropy. So it is immediate to generalize the JS-divergence to the setting of multiple distributions. In particular, we define the JS-divergence of $p_1, p_2, \dots,p_n$ with respect to weights $\pi_1, \pi_2, \dots,\pi_n$  ($\sum \pi_i=1$ and $\pi_i\ge 0$) as
\begin{equation}
    {JS}_{\pi_1,\dots,\pi_n}(p_1,p_2, \dots,p_n) \triangleq H(\sum\limits_{i=1}\limits^{n}\pi_ip_i)-\sum\limits_{i=1}\limits^n\pi_iH(p_i).
\end{equation}
The two-distribution case described above is actually a special case of the `multi-JS divergence', where  $\pi_1=\pi_2=\frac{1}{2}$.
When $\pi_i > 0 \;\forall i$, it can be found immediately by Jensen's inequality that ${JS}_{\pi_1,\dots,\pi_n}(p_1,p_2,\dots,p_n)=0$ if and only if $p_1=p_2=\dots=p_n$.

\section{Methodology}
\label{headings}
Training GAN has been challenging, especially when the generated distribution and the data distribution are far away from each other.
In such cases, the discriminator often struggles to provide useful information for the generator, leading to instability and mode collapse problems.
The key idea behind our approach is that we construct auxiliary intermediate distributions that interpolate between the generated distribution and the data distribution.
To do that, we first introduce an encoder network and combine it with the generator to learn the latent representation of real images within the framework of a standard autoencoder.
We then perturb these latent representations with carefully designed intervention operations before feeding them into the generator to create these auxiliary interpolating distributions.
A classifier is used to distinguish the intervened samples which leads to an intervention loss that penalizes the dissimilarity of these intervened distributions. 
The reconstruction loss and the intervention loss are added as regularization terms to the standard GAN loss for training.
We start with an introduction of some  notation and definitions.

\begin{definition}[Intervention]
Let $O$ be a transformation on the space of d-dimension random vectors and $\mathbbm{P}$ be a probability distribution whose support is in $\mathbb{R}^d$. We call $O$ a $\mathbbm{P}$-intervention if for any d-dimensional random variable $X$, $X\sim \mathbbm{P}\Rightarrow O(X)\sim \mathbbm{P}$. 
\end{definition}
Since the noise distribution in GAN models is usually taken to be standard Gaussian, we use the standard Gaussian distribution as the default choice of $\mathbbm{P}$ and abbreviate the $\mathbbm{P}$-intervention as \textit{intervention}, unless otherwise claimed.
To make the invariant distribution identifiable, we need a \emph{complete} group of interventions which is defined as follows.  
\begin{definition}[Complete Intervention Group]
Suppose $S=\left\{O_1,O_2,\ldots,O_k\right\}$ is a group of interventions. We say that $S$ is complete for distribution $\mathbbm{P}$ if
\[
\mathbbm{Q} \text{ is a distribution s.t. } O \text{ is a } \mathbbm{Q}\text{-intervention}, \forall O \in S \Longleftrightarrow\mathbbm{Q}=\mathbbm{P}.
\]
\end{definition}
One of the simplest complete groups of interventions is \textbf{block substitution}. Let $Z\in\mathbbm{R}^d$ be a random variable, $k\in \mathbbm{N}$ and $k|d$.
We slice $Z$ into $k$ blocks so that every block is in $\mathbbm{R}^{\frac{d}{k}}$.
A block substitution intervention $O_i$ is to replace the $i$th block of $Z$ with Gaussian noise, $i=1,\ldots, \frac{d}{k}$.
We will use block substitution interventions in the rest of the paper unless otherwise specified.
Note that our theoretical analysis as well as the algorithm framework does not depend on the specific choice of the intervention group, as long as it is complete.

\paragraph{Notation} We use $E, G, D, f$ to represent encoder, generator, discriminator and classifier, respectively. $p_{real}$ means the distribution of the real data, and $p_z$ is the prior distribution of noise $z$ defined on the latent space (usually is taken to be Gaussian). Let $O_i, i=1, \dots, k$, denote $k$ different interventions, and $X_i$ be the intervened sample created from $O_i$ with distribution $p_i$.
\label{sec:lapexp}

\begin{figure}
    \centering
    \includegraphics[width=1.\linewidth]{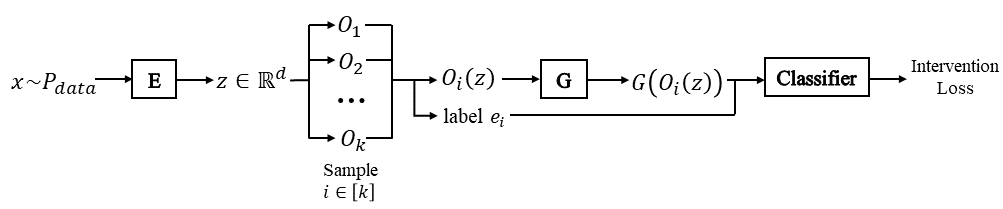}
    \caption{The figure shows the specific process of generating $X^\prime$ with labels as the input of the classifier. In the picture, $Z$ represent the latent code of given image $X$. Intervened samples are then generated through different intervention operations.}
    \label{fig:IVloss}
\end{figure}

\paragraph{Intervention loss}
The intervention loss is the core of our approach.
More specifically, given a latent representation $z$ that is generated by an encoder network $E$, we sample an intervention $O_i$ from a complete group $S=\left\{O_1, \dots, O_k\right\}$ and obtain the corresponding intervened latent variable $O_i(z)$ with label $e_i$.
These perturbed latent representations are then fed into the generator to produce \emph{intervened} samples.
We then introduce an auxiliary classifier network to identify which intervention operations may lead to these intervened samples.
The intervention loss  $ \mathcal{L}_{IV}(G,E)$ is simply the resulting negative cross-entropy loss and we add that as a regularizer to the objective function when training the generator.
As we can see,  the intervention loss is used to penalize the dissimilarity of the distributions of the images generated by different intervention operations. 
Moreover, it can be noticed that the classifier and the combination of the generator and the encoder are playing a two-player adversarial game and we will train them in an adversarial manner.  In particular, we define
\begin{equation}
  \mathcal{L}_{IV}(G,E) = -\min\limits_f V_{class}, \quad \mbox{where }  \quad V_{class}=\mathbbm{E}_{i\sim \mathcal{U}([k])}\mathbbm{E}_{ x^\prime\sim p_i}-e_{i}^\mathrm{T}\log f(x^\prime). 
\end{equation}\label{intervention  loss} \label{optimal intervention loss}

\begin{theorem}[Optimal Classifier]
The optimal solution of the classifier is the conditional probability of label $y$ given $X^\prime$, where $X^\prime$ is the intervened sample generated by the intervention operation sampled from $S$. And the minimum of the cross entropy loss is equivalent with the negative of the Jensen Shannon divergence among the $\left\{p_1,p_2,...,p_k\right\}$. That is, 
\begin{equation}
    f^*_i(x) = \frac{p_i(x)}{\sum\nolimits_{j=1}^k p_j(x)} \quad \mbox{and} \quad  
  \mathcal{L}_{IV}(G,E) = {JS}(p_1,p_2, ..., p_k) + \mbox{Const}.
\end{equation}
\end{theorem}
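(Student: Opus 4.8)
The plan is to treat this as a pointwise variational problem in $f$, exactly mirroring the classical derivation of the optimal GAN discriminator. First I would rewrite $V_{class}$ as a single integral over $x'$: since $i\sim\mathcal{U}([k])$ and $x'\sim p_i$, we have
\begin{equation}
V_{class} = -\frac{1}{k}\sum_{i=1}^{k}\int p_i(x')\, e_i^{\mathrm{T}}\log f(x')\,dx' = -\int \frac{1}{k}\sum_{i=1}^{k} p_i(x')\log f_i(x')\,dx',
\end{equation}
where $f_i$ denotes the $i$-th coordinate of $f(x')$, which lies on the probability simplex. For each fixed $x'$ the integrand depends on $f(x')$ only through the vector $(f_1(x'),\dots,f_k(x'))$, so it suffices to minimize, for each $x'$, the quantity $-\sum_i p_i(x')\log f_i(x')$ subject to $\sum_i f_i(x') = 1$ and $f_i(x')\ge 0$. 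This is the Gibbs-type inequality: the minimizer is $f_i^*(x') = p_i(x') / \sum_{j=1}^{k} p_j(x')$, which I would justify either by a Lagrange-multiplier computation or, more cleanly, by noting that $-\sum_i p_i(x')\log f_i(x') = -\big(\sum_j p_j(x')\big)\sum_i q_i\log f_i(x')$ with $q_i = p_i(x')/\sum_j p_j(x')$, and $\sum_i q_i\log f_i(x')$ is maximized over the simplex at $f = q$ since the gap is a KL divergence $\mathrm{KL}(q\|f)\ge 0$.

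Next I would substitute $f = f^*$ back into $V_{class}$ and simplify. Writing $\bar p(x') = \frac{1}{k}\sum_j p_j(x')$, we get
\begin{equation}
\min_f V_{class} = -\int \frac{1}{k}\sum_{i=1}^{k} p_i(x')\log\frac{p_i(x')}{\sum_{j} p_j(x')}\,dx' = -\sum_{i=1}^{k}\frac{1}{k}\int p_i(x')\log\frac{p_i(x')}{k\,\bar p(x')}\,dx'.
\end{equation}
Splitting the logarithm as $\log\frac{p_i}{k\bar p} = \log\frac{p_i}{\bar p} - \log k$, the $\log k$ terms contribute $+\log k$ (since the weights $1/k$ sum to $1$), so $\min_f V_{class} = -\sum_i \frac{1}{k}\mathrm{KL}(p_i\|\bar p) + \log k$. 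Then I would invoke the entropy-form identity for the multi-distribution JS divergence stated earlier in the excerpt: $\sum_i \pi_i\mathrm{KL}(p_i\|\sum_j\pi_j p_j) = H(\sum_j\pi_j p_j) - \sum_i\pi_i H(p_i) = \mathrm{JS}_{\pi_1,\dots,\pi_n}(p_1,\dots,p_n)$. With uniform weights $\pi_i = 1/k$ this gives $\sum_i \frac{1}{k}\mathrm{KL}(p_i\|\bar p) = \mathrm{JS}(p_1,\dots,p_k)$, hence $\min_f V_{class} = -\mathrm{JS}(p_1,\dots,p_k) + \log k$. Since $\mathcal{L}_{IV}(G,E) = -\min_f V_{class}$, we obtain $\mathcal{L}_{IV}(G,E) = \mathrm{JS}(p_1,\dots,p_k) - \log k$, and $-\log k$ is the advertised additive constant.

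The main obstacle, such as it is, is not any deep step but a matter of care: justifying the exchange of minimization and integration (that a pointwise minimizer of the integrand yields the global minimum over measurable $f$ valued in the simplex), and handling the boundary/degenerate cases where some $p_i(x') = 0$ on a set of positive measure, so that $\log f_i$ or $\log(p_i/\bar p)$ must be interpreted via the usual convention $0\log 0 = 0$. I would also note in passing that the optimum is attained only when $f$ is chosen to have full support wherever $\bar p > 0$, which is needed for $V_{class}$ to be finite; elsewhere the value of $f$ is irrelevant. Modulo these standard measure-theoretic remarks, the computation above is the whole proof.
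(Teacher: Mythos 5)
Your proposal is correct and follows essentially the same route as the paper's own proof: both reduce $V_{class}$ to a pointwise minimization over the simplex (your Gibbs/KL-nonnegativity step is the same inequality the paper obtains via Jensen's inequality applied to the ratios $f_i/p(e_i|x)$), identify the minimizer $f_i^* = p_i/\sum_j p_j$ as the posterior, and plug back to recover the multi-distribution JS divergence up to the constant $\log k$. The only cosmetic difference is that you pass through the weighted-KL form $\sum_i \frac{1}{k}\mathrm{KL}(p_i\|\bar p)$ while the paper expands directly into the entropy form $H(\bar p)-\frac{1}{k}\sum_i H(p_i)$; these are the same identity, so no substantive divergence.
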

\begin{proof}
The conditional probability of $X^\prime$ given label can be written as $\mathbbm{P}(X^\prime| e_i) = p_i(X^\prime)$, so further $\mathbbm{P}(X^\prime,e_i) = \frac{1}{k}p_i$. And we denote the marginal distribution of $x$ as $p(x)=\frac{1}{k}\sum\limits_{i=1}^k p_i(x)$. Cause the activation function at the output layer of the classifier is softmax, we can rewrite the loss function into a more explicit form:
\begin{equation*}
\begin{aligned}
    V_{class}(f) &= \mathbbm{E}_{i\sim\mathcal{U}[k]}\mathbbm{E}_{x^\prime\sim p_i}-e_i^\mathrm{T}\log f(x^\prime)=\mathbbm{E}_{i\sim\mathcal{U}[k]}\mathbbm{E}_{x^\prime\sim p_i}-\log f_i(x)\\ &= \frac{1}{k}\int \sum\limits_{i=1}\limits^k -p_i(x)\log f_i(x)dx=\int p(x)\left\{-\sum\limits_{i=1}\limits^k p(e_i|x)\log f_i(x)\right\}dx.
\end{aligned}
\end{equation*}
Let $g_i(x) = \frac{f_i(x)}{p(e_i|x)}$, then $\sum\limits_{i=1}\limits^k p(e_i|x)g_i(x)=1$. And notice that $\sum\limits_{i=1}\limits^k p(e_i|x)=1$. By Jensen's inequality, we have:
\begin{equation*}
    \begin{aligned}
    &\sum\limits_{i=1}\limits^k-p(e_i|x)\log f_i(x)=  \sum\limits_{i=1}\limits^k-p(e_i|x)\log [g_i(x)p(e_i|x)]\\
    & = \sum\limits_{i=1}\limits^k-p(e_i|x)\log g_i(x) + H(p(\cdot|x))
    \ge \log\sum\limits_{i=1}^k p(e_i|x)g_i(x) + H(p_i(\cdot|x))\\
    &= \log 1 + H(p(\cdot|x)) = H(p(\cdot|x)).
    \end{aligned}
\end{equation*}
And $V_{class}(f^*) = \int p(x)H(p_i(\cdot|x))dx$ if and only if $g^*_i(x)= g^*_j(x)$ for any $i\neq j$, which means that $\frac{f^*_i(x)}{p(e_i|x)}=r \quad \forall i\in [k]$, where $r\in\mathbbm{R}$. Notice that $\sum\limits_{i=1}^k f^*_i(x)=1$, it is not difficult to get that $f^*_i(x)=p(e_i|x)$. The loss function becomes
\begin{equation}
    \begin{aligned}
    &\frac{1}{k}\int\sum\limits_{i=1}^k -p_i(x)\log p(e_i|x)dx = -H(x)+\sum\limits_{i=1}\limits^k \frac{1}{k}H(p_i) + \log k\\
    &= -{JS}(p_1,p_2, ...,p_k) + \log k
    \end{aligned}
\end{equation}
\end{proof}

Clearly,  the intervention loss is an approximation of the JS divergence among the intervened distributions $\left\{p_i:i\in [k]\right\}$. 
If the intervention loss reaches its global minimum, we have $p_1=p_2=\dots=p_k$. 
And it reaches the maximum $\log k$ if and only if the supports of these $k$ distributions do not intersect with each other.
This way, the probability that the `multi' JS-divergence has constant value is much smaller, which means the phenomenon of gradient vanishing should be rare in IVGAN.
Moreover, as shown in the following example, due to these auxiliary intervened distributions, the intervention loss is likely to provide more informative gradient for the generator that is not previously available in other GAN variants.\\[-10pt]

\begin{example}[Square fitting] \label{exm:1}
Let $X_0$ be a random variable with distribution $\mathcal{U}(\alpha)$, where $\alpha = [-\frac{1}{2}, \frac{1}{2}]\times [-\frac{1}{2}, \frac{1}{2}]$. And $X_1 \sim \mathcal{U}(\beta)$, where $\beta = [a-\frac{1}{2},a+\frac{1}{2}]\times[\frac{1}{2},\frac{3}{2}]$ and $0\le a\le 1$. Assuming we have a perfect discriminator (or classifier), we compute the vanilla GAN loss (i.e. the JS-divergence) and the intervention loss between these two distributions, respectively,
\begin{itemize}
    \item ${JS}(X_0\|X_1) = \log 2$.
    \item
    In order to compute the intervention loss we need figure out two intervened samples' distributions evolved from $\mathcal{U}(\alpha)$ and $\mathcal{U}(\beta)$. $Y_1\sim \mathcal{U}(\gamma_1);\quad \gamma_1 = [-\frac{1}{2},\frac{1}{2}]\times[\frac{1}{2},\frac{3}{2}]$ and $Y_2\sim\mathcal{U}(\gamma_2);\quad \gamma_2 = [a-\frac{1}{2},a+\frac{1}{2}]\times[-\frac{1}{2},\frac{1}{2}]$. Then the intervention loss is the multi JS-divergence among these four distributions:
    \begin{equation*}
        \begin{aligned}
          \mathcal{L}_{IV} & = {JS}(X_0;X_1;Y_1;Y_2)\\
                &= {-}\int_{A^c}\frac{1}{4}\log\frac{1}{4}d\mu {-} \int_A\frac{1}{2}\log\frac{1}{2}d\mu {-} H(X_0) = \frac{\log 2}{2}[\mu(A^c) {+} \mu(A)]\\
                &= \frac{\log 2}{2}\times 2(2-a)-H(X_0)=-(\log2)a-\mbox{Const}.
        \end{aligned}
    \end{equation*}
\end{itemize}
Here $A$ is the shaded part in Figure \ref{fig:square_fitting} and $A^c=\left\{\alpha\cup\beta\cup\gamma_1\cup\gamma_2\right\}\backslash A$. The most important observation is that the intervention loss is a function of parameter $a$ and the traditional GAN loss is always  constant. When we replace the JS with other $f$-divergence, the metric between $\mathcal{U}(\alpha)$ and $\mathcal{U}(\beta)$ would still remain constant. Hence in this situation, we can not get any information from the standard JS for training of the generator but the intervention loss works well.
\end{example}

\begin{figure}
    \centering
    \includegraphics[width=.5\linewidth]{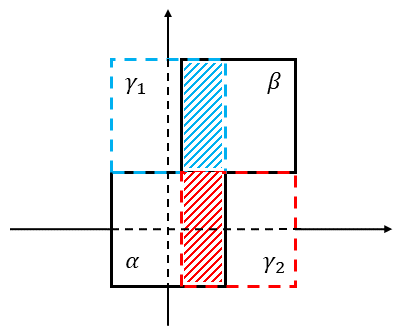}
    \caption{The supports of the two original distribution are the squares with black border, and the supports of the synthetic distributions are the area enclosed by red and blue dotted line, respectively.}
    \label{fig:square_fitting}
\end{figure}

\paragraph{Reconstruction loss}
In some sense we expect our encoder to be a reverse function of the generator. So it is necessary for the objective function to have a term to push the map composed of the Encoder and the Generator to have the ability to reconstruct the real samples. Not only that, we also hope that the representation can be reconstructed from samples in the pixel space. 

Formally, the reconstruction loss can be defined by the $\ell_p$-norm ($p\ge 1$) between the two samples, or in the from of the Wasserstein distance between samples if images are regarded as a histogram. Here we choose to use the $\ell_1$-norm as the reconstruction loss:
\begin{equation}
    \mathcal{L}_{recon} = \mathbbm{E}_{X\sim p_{real}}\|G(E(X)) {-} X\|_1 +\mathbbm{E}_{i\sim\mathcal{U}([k])}\mathbbm{E}_{x,z\sim p_{real},p_z}\|E(G(O_i(z))) {-} O_i(z)\|_1.
\end{equation}\label{reconstruction loss}

\begin{theorem}[Inverse Distribution]
Suppose the cumulative distribution function of $O_i(z)$ is $q_i$. For any given positive real number $\epsilon$, there exist a $\delta>0$ such that if $\mathcal{L}_{recon} + \mathcal{L}_{IV}\le \delta$ , then $\forall i,j\in [k]$, $\sup\limits_r\|q_i(r)-q_j(r)\|\le \epsilon$.
\end{theorem}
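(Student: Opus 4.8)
The plan is to run three estimates in sequence and glue them with the triangle inequality. Write $Y_i:=O_i(E(X))$ with $X\sim p_{real}$ for the intervened latent variable, so that $p_i$ is the law of $G(Y_i)$ and $q_i$ is the c.d.f.\ of $Y_i$; also put $\mu_i:=E_\#\,p_i$, the law of $E(G(Y_i))$. Step one turns the intervention loss into total‑variation closeness of the sample distributions: by the Optimal Classifier theorem $\mathcal L_{IV}(G,E)$ equals ${JS}(p_1,\dots,p_k)$ up to an additive constant, and the same entropy bookkeeping used in that proof gives the identity ${JS}(p_1,\dots,p_k)=\tfrac1k\sum_{i=1}^k {KL}(p_i\|\bar p)$ with $\bar p=\tfrac1k\sum_j p_j$. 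Hence $\mathcal L_{recon}+\mathcal L_{IV}\le\delta$ (with $\mathcal L_{recon}\ge 0$) forces each nonnegative summand ${KL}(p_i\|\bar p)$ to be $O(k\delta)$, so by a Pinsker‑type inequality $\mathrm{TV}(p_i,\bar p)$, and then $\mathrm{TV}(p_i,p_j)\le\mathrm{TV}(p_i,\bar p)+\mathrm{TV}(p_j,\bar p)$, is $O(\sqrt{k\delta})$ for all $i,j$.

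Step two uses the reconstruction loss to ``invert'' the generator on latents: it controls $\mathbbm{E}\,\|E(G(Y_i))-Y_i\|_1$, so by Markov's inequality $E(G(Y_i))$ agrees with $Y_i$ up to any prescribed tolerance with probability close to $1$; consequently $\mu_i$ is close to the law of $Y_i$, whence $\sup_r|q_i(r)-F_{\mu_i}(r)|$ is small (modulo the caveat below). Step three is the data‑processing inequality for total variation: since $E$ is a fixed measurable map, $\mathrm{TV}(\mu_i,\mu_j)\le\mathrm{TV}(p_i,p_j)$, which is $O(\sqrt{k\delta})$ by Step one, and a total‑variation bound between two measures bounds the sup‑distance of their c.d.f.'s, so $\sup_r|F_{\mu_i}(r)-F_{\mu_j}(r)|\le\mathrm{TV}(\mu_i,\mu_j)$. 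Chaining $q_i\approx F_{\mu_i}\approx F_{\mu_j}\approx q_j$ and choosing $\delta$ small enough (essentially $\delta\asymp\epsilon^2/k$, up to the factor coming from the caveat) makes $\sup_r\|q_i(r)-q_j(r)\|\le\epsilon$.

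The main obstacle is precisely the passage in Step two from ``$E(G(Y_i))$ is $\ell_1$‑close to $Y_i$ in expectation'' to a \emph{uniform‑in‑$r$} bound on the gap between the two c.d.f.'s: displacing probability mass by a small Euclidean amount can change a c.d.f.\ pointwise by as much as the mass lying in a thin slab, which need not be small when $Y_i$ is nearly atomic, so the sup‑norm conclusion genuinely requires some regularity. The natural fix is to exploit the interventions the paper actually uses: a block‑substitution intervention overwrites a block of coordinates with fresh standard Gaussian noise, so each $Y_i$ has a density that is uniformly bounded (a Gaussian convolution in the substituted coordinates), and a uniform density bound converts the Markov estimate of Step two into an equicontinuity bound on the $q_i$, hence into the desired uniform bound; alternatively one proves the statement in the L\'evy metric, where no regularity is needed, and recovers the sup‑norm version whenever the c.d.f.'s are continuous. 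Everything else — the entropy identity and Pinsker in Step one, the data‑processing inequality in Step three, and reading off $\delta(\epsilon)$ — is routine.
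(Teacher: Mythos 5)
Your proposal follows essentially the same route as the paper's own proof: small intervention loss $\Rightarrow$ small multi-JS $\Rightarrow$ small pairwise TV between the $p_i$ (the paper invokes ``JS and TV are equivalent in the sense of convergence'' where you use the KL decomposition plus Pinsker), then data processing through $E$ to compare the laws of $E(G(O_i(z)))$, then Markov on the reconstruction error and a sandwich of c.d.f.'s to transfer the bound to $q_i$. The regularity caveat you flag in Step two is genuine and is in fact the weak point of the paper's own argument, which hides it in the terms $s_i(r,\delta)=|P(E(G(O_i(z)))\le r\pm\delta)-P(E(G(O_i(z)))\le r)|$ and simply asserts they vanish as $\delta\to 0$ (pointwise, with no uniformity in $r$ and with $\delta$ also appearing in the Markov bound), so your bounded-density/L\'evy-metric fix is an improvement rather than a deviation.
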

\begin{proof}
According to Theorem 1, for a given real number $\epsilon_1$, we can find another $\delta_1$, when intervention loss is less than $\delta_1$, the distance between $p_i$ and $p_j$ under the measurement of JS-divergence is less than $\epsilon_1$. And because JS-divergence and Total Variance distance (TV) are equivalent in the sense of convergence. So we can bound the TV-distance between $p_i$ and $p_j$ by their JS-divergence. Which means that $\int |p_i -p_j|dx \le \epsilon_0$ when the intervention loss is less than $\epsilon_1$ (we can according to the $\epsilon_0$ to finding the appropriate $\epsilon_1$). Using this conclusion we can deduce $|P(E(G(O_i(z)))\le r)-P(E(G(O_j(z)))\le r)|\le \epsilon_0$, where $r$ is an arbitrary vector in $\mathbbm{R}^d$. Further, we have:
\begin{equation}
\begin{aligned}
    &|P(O_i(z)\le r)-P(O_j(z)\le r)|\le |P(O_i(z)\le r; \|O_i(z) - E(G(O_i(z)))\|> \delta)|\\
    &+ |P(O_j(z)\le r; \|O_j(z) - E(G(O_j(z)))\|> \delta)| + |P(O_i(z)\le r;  \|O_i(z)-E(G(O_i(z)))\|\le \delta)\\
    &-P(O_j(z)\le r; \|O_j(z)-E(G(O_j(z)))\|\le \delta)|
\end{aligned}
\end{equation}\label{weak convergence control}
We control the three terms on the right side of the inequality sign respectively.
\begin{equation}
\begin{aligned}
    &P(O_i(z)\le r;\|O_i(z)-E(G(O_i(z)))\|>\delta)\\
    &\le P(\|O_i(z)-E(G(O_i(z)))\|>\delta)\le \frac{\mathbbm{E}\|O_i(z)-E(G(O_i(z)))\|}{\delta}
\end{aligned}
\end{equation}
And the last term can be bounded by the reconstruction loss. The same trick can be used on $P(O_j(z)\le r; \|O_j(z)-E(G(O_j(z)))\|>\delta)$. Moreover, we have
\begin{equation}
\begin{aligned}
    &P(E(G(O_i(z)))\le r-\delta) - P(\|O_i(z)-E(G(O_i(z)))\|>\delta)\\\le &P(O_i(z)\le r;\|O_i(z)-E(G(O_i(z)))\|\le\delta)\le P(E(G(O_i(z)))\le r+\delta)
\end{aligned}
\end{equation}
Notice that $\lim\limits_{\delta \to 0}P(E(G(O_i(z)))\le r\pm \delta)=P(E(G(O_i(z)))\le r)$. Let $s_i(r,\delta) = |P(E(G(O_i(z)))\le r\pm\delta))-P(E(G(O_i(z)))\le r)|$ then the last term of inequality\ref{weak convergence control} can be bounded as:
\begin{equation}
    \begin{aligned}
    |P(O_i(z)\le r;&  \|O_i(z)-E(G(O_i(z)))\|\le \delta)-P(O_j(z)\le r; \|O_j(z)-E(G(O_j(z)))\|\le \delta)|\\
    \le &|P(E(G(O_i(z)))\le r)-P(E(G(O_j(z)))\le r)|+ P(\|O_i(z)-E(G(O_i(z)))\|>\delta)\\
    &+ s_i(r,\delta) + s_j(r,\delta)
    \end{aligned}
\end{equation}
Every term on the right hand of the inequality can be controlled close to 0 by the inequalities mentioned above
\end{proof}

\paragraph{Adversarial loss}
The intervention loss and reconstruction loss can be added as regularization terms to the adversarial loss in many GAN models, e.g., the binary cross entropy loss in vanilla GAN and the least square loss in LSGAN.
In the experiments, we use LSGAN\citep{DBLP:journals/corr/MaoLXLW16} and DCGAN\citep{radford2015unsupervised} as our base models, and name the resulting IVGAN models IVLSGAN and IVDCGAN respectively.

Now that we have introduced the essential components in the objective of IVGAN, we can write the loss function of the entire model:
\begin{equation}\label{entire loss}
    \mathcal{L}_{model}=\mathcal{L}_{Adv}+\lambda\mathcal{L}_{recon}+\mu\mathcal{L}_{IV},
\end{equation}
where $\lambda$ and $\mu$ are the regularization coefficients for the reconstruction loss and the intervention loss respectively. We summarize the training procedure in Algorithm~\ref{alg:1}. Full workflow of our framework can be seen in Figire \ref{full_workflow}.
\begin{figure}
    \centering
    \includegraphics[width=0.8\textwidth]{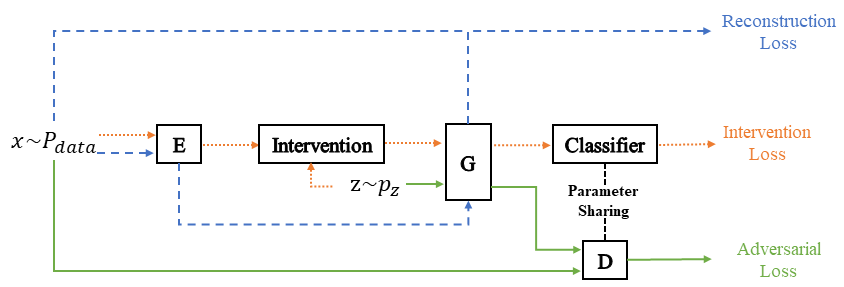}
    \caption{Full workflow of our approach.}
    \label{full_workflow}
\end{figure}

\begin{algorithm}[t]
\caption{Intervention  GAN} \label{alg:1}
\hspace*{0.02in}{\textbf{Input}} 
learning rate $\alpha$, regularization parameters $\lambda$ and $\mu$, dimension $d$  of latent space,  number $k$ of blocks in which the hidden space is divided, minibatch size $n$.
\begin{algorithmic}[1]
\For{number of training iterations}
    \State Sample minibatch $z_j$, $j=1,...,n$, $z_j\sim p_z$
    \State Sample minibatch $x_j$, $j=1,...,n$, $x_j\sim p_{real}$
    \For{number of inner iteration}
        \State $w_j \gets E(x_j)$, $j=1,...,n$
        \State Sample Gaussian noise $\epsilon$
        \State Sample $i_j\in [k]$, $j=1,...,n$
        \State $x^\prime_j \gets G(O_{i_j}(w_j))$
        \State Update the parameters of $D$ by:
        \State $\quad\theta_D \gets \theta_D - \frac{\alpha}{2n}\nabla_{\theta_D}\mathcal{L}_{adv}(\theta_D)$
        \State Update the parameters of $f$ by:
        \State $\quad\theta_f\gets \theta_f + \frac{\alpha}{n}\nabla_{\theta_f}\sum\limits_{j=1}\limits^n\log f_{i_j}(x^\prime_j)$
        \State Calculate $\mathcal{L}_{Adv}$ and $\mathcal{L}_{IV}$
    \EndFor
    \State Update the parameter of $G$ by:
    \State $\quad \theta_G\gets\theta_G + \frac{\alpha}{n}\nabla_{\theta_G}\left\{\hat{\mathcal{L}}_{Adv} + \lambda\hat{\mathcal{L}}_{recon} + \mu\hat{\mathcal{L}}_{IV}\right\}$
    \State Update the parameter of $E$ by:
    \State $\quad \theta_E\gets\theta_E + \frac{\alpha}{n}\nabla_{\theta_E}\left\{\lambda\hat{\mathcal{L}}_{recon}+\mu\hat{\mathcal{L}}_{IV}\right\}$
\EndFor
\end{algorithmic}
\end{algorithm}

\section{Related Work}
In order to address GAN's unstable training and mode missing problems, many researchers have turned their attention to the latent representations of samples. \citep{DBLP:journals/corr/MakhzaniSJG15} propose the \textit{Adversarial Autoencoder} (AAE). As its name suggests, AAE is essentially a probabilistic autoencoder based on the framework of GANs. Unlike classical GAN models, in the setting of AAE the discriminator's task is to distinguish the latent representations of real images that are generated by an Encoder network from Gaussian noise. And the generator and the encoder are trained to fool the discriminator as well as reconstruct the input image from the encoded representations. However, the generator can only be trained by fitting the reverse of the encoder and cannot get any information from the latent representation.

The VAE-GAN\citep{DBLP:journals/corr/LarsenSW15} combines the objective function from a VAE model with a GAN and utilizes the learned features in the discriminator for better image similarity metrics, which is of great help for the sample visual fidelity. Considering the opposite perspective, \citep{che2016mode} claim that the whole learning process of a generative model can be divided into the manifold learning phase and the diffusion learning phase. And the former one is considered to be the source of the mode missing problem. They propose \textit{Mode Regularized Generative Adversarial Nets} which introduce a reconstruction loss term to the training target of GAN to penalize the missing modes. It is shown that it actually ameliorates GAN's 'mode missing'-prone weakness to some extent. However, both of them fail to fully excavate the impact of the interaction between VAEs and GANs.

\citep{kim2018disentangling} propose Factor VAE where a regularization term called total correlation penalty is added to the traditional VAE loss. The total correlation is essentially the Kullback-Leibler divergence between the joint distribution $p(z_1,z_2,\dots,z_d)$ and the product of marginal distribution $p(z_i)$. Because the closed forms of these two distribution are unavailable, Factor VAE uses adversarial training to approximate the likelihood ratio. 

\section{Experiments}
In this section we conduct a series of experiments to study IVGAN from multiple aspects. First we evaluate IVGAN's performance on standard real-world datasets, including MNIST \citep{mnist}, CIFAR10 \citep{cifar10}, LSUN \citep{lsun} and STL-10 \citep{stl}. Then we show IVGAN's ability to tackle the mode collapse problem on the stacked MNIST dataset. Finally, through an ablation study we investigate the performance of our proposed method under different settings of hyperparameters and demonstrate the effectiveness of the intervention loss. As stated in the previous sections, we employ the block substitution intervention when implementing our method in all the experiments.

We implement our models using PyTorch \citep{pytorch}. In order to make fair comparison, we try to minimize the choices of the architectures, hyperparameters and optimizer settings in our implementation of IVGAN as well as other baseline methods. (see the specific network architectures in Table \ref{arch})
The classifier we use to compute the intervention loss shares the parameters with the discriminator except for the output layer.
Accordingly, all input images are resized to have $64\times64$ pixels.
We use 100-dimensional standard Gaussian distribution as the prior $p_z$. The hyperparameters are set as follows: $k=4$; $\lambda=0.25$, $\mu=0.5$ when training G \& D and $\lambda=1$, $\mu=1$ when training E. Our models are trained with the Adam optimizer. The learning rate is set to $1\times 10^{-4}$ when training the discriminator, the classifier and the generator, and $5\times 10^{-3}$ when training the encoder. The momentum coefficient $\beta_1$ is set to $0.5$ and $\beta_2$ is set to $0.999$. In addition, in the very early stage of training we add zero-mean Gaussian noise to the pixel space of both generated images and images sampled from the ground truth distribution. And the variance of the added noise is decayed as the training proceeds.

As for the baseline methods the architectures of the discriminator and the generator are identical to our method.
We try multiple optimizer settings and hyperparameters which are already proved to work well and choose the best ones from them. We also try adding noise to the input images, and find that such tricks can only worsen the performance of the baseline methods.

\begin{table}
  \caption{The NN architecture used by us, where CONV denotes the convolutional layer, TCONV denotes the transposed convolutional layer, FC denotes the fully-connected layer, BN denotes the batch normalization layer, and (K4, S1, O512) denotes a layer with kernel of size 4, stride 1, and 512 output channels.}
  \label{arch}
  \centering
  \begin{tabular}{ccc}
    \toprule
    D   & G     & E \\
    \midrule
    INPUT 64$\times$64$\times$3& INPUT z & INPUT 64$\times$64$\times$3 \\
    CONV(K4, S2, O64) & TCONV(K4, S1, O512) & CONV(K4, S2, O64)\\
    BN, LeakyReLU & BN, ReLU & LeakyReLU \\
    CONV(K4, S2, O128)& TCONV(K4, S2, O256)& CONV(K4, S2, O128)\\
    BN, LeakyReLU & BN, ReLU & BN, LeakyReLU \\
    CONV(K4, S2, O256) & TCONV(K4, S2, O128) & CONV(K4, S2, O256)\\
    BN, LeakyReLU & BN, ReLU & BN, LeakyReLU \\
    CONV(K4, S2, O512) & TCONV(K4, S2, O64)& CONV(K4, S2, O512)\\
    BN, LeakyReLU & BN, ReLU & BN, LeakyReLU \\
    FC(O1) & TCONV(K4, S2, O3) & CONV(K4, S2, O100)\\
    LOSS & Tanh & BN\\
    \bottomrule
  \end{tabular}
\end{table}
\paragraph{Real-world datasets experiments}
We train our proposed method on four standard real-world datasets to investigate its training stability and quality of the generated images. The datasets we use include CIFAR-10, MNIST, STL-10, and a subclass named ``church\_outdoor" of the LSUN dataset. We use the \textit{Fr\`echet Inception Distance} (FID) (proposed in \citep{heusel2017gans}) to measure the performance of our models as well as other baseline methods in a quatitative way. 

The FID results are listed in Table \ref{minFID}, and the training curves of the baseline methods and IVGAN on four different datasets are shown in Figure \ref{FID curve}. We see that on each datasets, IVGAN or IVLSGAN obtain better FID scores than the baselines. Moreover, the figure of training curves also suggests the learning process of IVGAN and IVLSGAN is smoother and steadier compared to DCGAN, LSGAN or MRGAN \citep{che2016mode}, and converges much faster than WGAN or WGAN-GP. Samples of generated images on all datasets are included in Figure \ref{fig:samples}.

\begin{table}
  \caption{Minimum of FIDs on different Datasets. The FID results are calculated every 10 epochs. Lower is better .}
  \label{minFID}
  \vspace{0.5cm}
  \centering
  \begin{tabular}{ccccc}
    \toprule
    Methods & MNIST & CIFAR10 & LSUN (Church\_outdoor) & STL-10 \\
    \midrule
    DCGAN & 10.7 & 51.2 & 23.6 & 44.0\\
    LSGAN & 10.9 & 34.7 & 30.9 & 60.1\\
    WGAN & 12.0 & 34.6 & 33.3 & 55.0\\
    WGAN-GP & 11.0 & 38.3 & 26.8 & 47.9\\
    MRGAN & 7.4 & 33.0 & 22.4 & 45.2\\
    \midrule
    IVDCGAN & \textbf{5.4} & 32.2 & 20.5 & 43.7\\
    IVLSGAN & 8.3 & \textbf{28.2} & \textbf{18.9} & \textbf{41.6}\\
    \bottomrule
  \end{tabular}
\end{table}

\begin{figure}[ht]
\begin{subfigure}
  \centering
  \includegraphics[width=.5\linewidth]{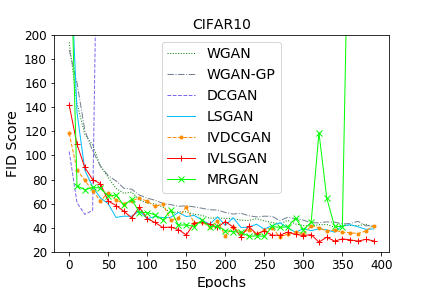}  
\end{subfigure}
\begin{subfigure}
  \centering
  \includegraphics[width=.5\linewidth]{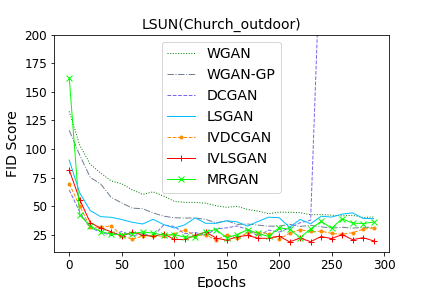}  
\end{subfigure}
\caption{Training curves of different methods in terms of FID on different datasets. {\bf Left}: CIFAR10. {\bf Right}: Church Outdoors. Note that on both datasets the training of DCGAN fails at some point. The raise of curves in the later stage may indicate mode collapse.}
\label{FID curve}
\end{figure}

\begin{figure}[ht]
\begin{subfigure}
 \centering
 \includegraphics[width=.24\linewidth]{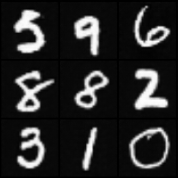}  
\end{subfigure}
\begin{subfigure}
 \centering
 \includegraphics[width=.24\linewidth]{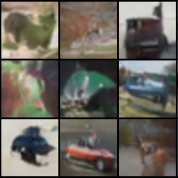}  
\end{subfigure}
\begin{subfigure}
 \centering
 \includegraphics[width=.24\linewidth]{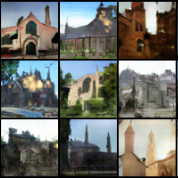}  
\end{subfigure}
\begin{subfigure}
 \centering
 \includegraphics[width=.24\linewidth]{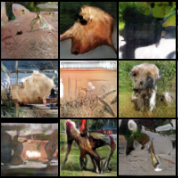}  
\end{subfigure}
\caption{Samples of generated images. The samples are not cherry-picked, but generated in a random way.}
\label{fig:samples}
\end{figure}

\paragraph{Stacked MNIST experiments}
The original MNIST dataset contains 70K images of $28\times28$ handwritten digits. Following the same approaches in \citep{unrolledGAN}, \citep{veegan}, \citep{PACGAN}, we increase the number of modes of the dataset from 10 to $1000=10\times10\times10$ by stacking the images. Specifically, the new stacked MNIST dataset consists of images which is constructed by stacking three random MNIST images into a $28\times28\times3$ RGB image. The metric we use to evaluate a model's robustness to mode collapse problem is the number of modes captured by the model, as well as the KL divergence between the generated distribution over modes and the expected (uniform) one. After an image is generated, we determine which of the 1000 modes the generated image belong to by feeding each of the three channels to a pre-trained MNIST digit classifier.

Our result are shown in Table \ref{stackMNIST}. It can be seen that our model works very well to prevent the mode collapse problem. Both IVLSGAN and IVDCGAN are able to reach all 1,000 modes and greatly outperforms early approaches to mitigate mode collapse, such as VEEGAN \citep{veegan}, and Unrolled GAN \citep{unrolledGAN}. Moreover, the performance of our model is also comparable to method that is proposed more recently, such as the PacDCGAN \citep{PACGAN}. Figure \ref{stackMNISTimage} shows images generated randomly by our model as well as the baseline methods.

\begin{table}
  \caption{Results of our stacked MNIST experiments. The first four rows are directly copied from \cite{PACGAN} and \cite{veegan}. And the last three rows are obtained after training each model for 100K iterations, respectively.}
  \label{stackMNIST}
  \centering
  \begin{tabular}{ccc}
    \toprule
       & Modes     & KL Divergence \\
    \midrule
    DCGAN & 78.9 & 4.50 \\
    VEEGAN & 150.0 & 2.95\\
    Unrolled GAN & 48.7 & 4.32 \\
    PacDCGAN & 1000 & 0.06\\
    \midrule
    LSGAN & 53 & 3.88\\
    IVLSGAN & 1000 & 0.07\\
    IVDCGAN & 1000 & 0.08 \\
    \bottomrule
  \end{tabular}
\end{table}
 
\begin{figure}[ht]
\begin{subfigure}
 \centering
 \includegraphics[width=.32\linewidth]{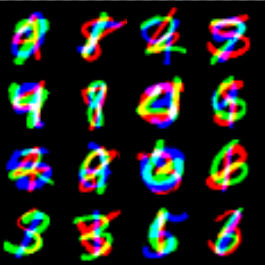}  
\end{subfigure}
\begin{subfigure}
 \centering
 \includegraphics[width=.32\linewidth]{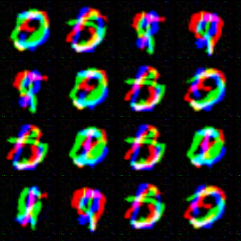}  
\end{subfigure}
\begin{subfigure}
 \centering
 \includegraphics[width=.32\linewidth]{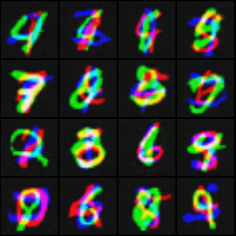}  
\end{subfigure}
\caption{Sampled images on the stacked MNIST dataset. {\bf Left}: Ground-truth. {\bf Middle}: LSGAN. {\bf Right}: IVLSGAN. Images generated by our method are more diverse.}
\label{stackMNISTimage}
\vspace{0.3cm}
\end{figure}

\paragraph{Ablation study}
Our ablation study is conducted on the CIFAR-10 dataset. First, we show the effectiveness of the intervention loss. We consider two cases, IVLSGAN without the intervention loss (achieved by setting $\mu=0$), and standard IVLSGAN (here $\mu$ is set to be 0.5). From Figure \ref{ablation_IVloss} we can find that the intervention loss makes the training process much smoother and leads to a lower FID score in the end. 

\begin{figure}
  \begin{minipage}[c]{0.47\linewidth}
    \vspace{0.05cm}
    \centering
    \includegraphics[width = \linewidth]{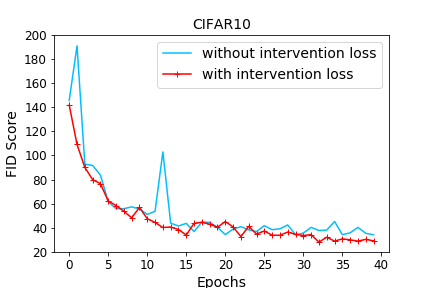}
    \figcaption{Training curve of IVLSGAN, with and without the intervention loss.}
    \label{ablation_IVloss}
  \end{minipage}
  \hfill
  \begin{minipage}[c]{0.5\linewidth}
    \begin{center}
    \vspace{0.05cm}
    \tabcaption{Minimum FID scores of IVLSGAN under different hyperparameter settings on the CIFAR10 dataset, calculated every 10 epochs.}
    \label{ablation_table}
    \vspace{0.2cm}
    \begin{tabular}{llc}
    \toprule
      & & FID score \\
    \midrule
    $\mu=0.5$&$k=2$ & 29.2 \\
    $\mu=0.5$&$k=4$ & 28.2\\
    $\mu=0.5$&$k=10$ & 41.2 \\
    $\mu=0.5$&$k=20$ & 36.1\\
    \midrule
    $\mu=0$ & $k=4$ & 34.5 \\
    $\mu=0.25$ & $k=4$ & 29.6\\
    $\mu=0.5$&$ k=4$ & 28.2 \\
    $\mu=1$&$k=4$ & 39.7\\
    \bottomrule
    \end{tabular}
    \end{center}
  \end{minipage}
\vspace{0.1cm}
\end{figure}
We also investigate the performance of our model using different number of blocks for the block substitution interventions and different regularization coefficients for the intervention loss.
The results are presented in Table \ref{ablation_table}. It can be noticed that to some extent our models' performance is not sensitive to the choice of hyperparameters and performs well under several different hyperparameter settings. However, when the number of blocks or the scale of IV loss becomes too large the performance of our model gets worse.

\section{Conclusion}
We have presented a novel model, intervention GAN (IVGAN), to stabilize the training process of GAN and alleviate the mode collapse problem.
By introducing auxiliary Gaussian invariant interventions to the latent space of real images and feeding these perturbed latent representations into the generator, we create intermediate distributions that interpolate between the generated distribution of GAN and the data distribution.
The intervention loss based on these auxiliary intervened distributions, together with the reconstruction loss, are added as regularizers to the objective to provide more informative gradients for the generator, significantly improving GAN's training stability and alleviating the mode collapse problem as well.

We have conducted a detailed theoretical analysis of our proposed approach, and illustrated the advantage of the proposed intervention loss on a toy example.
Experiments on real-world datasets as well as the stacked MNIST dataset demonstrate that, compared to the baseline methods, IVGAN variants are stabler and smoother during training, and are able to generate images of higher quality (achieving state-of-the-art FID scores) and diversity.

\small{

\bibliography{main}
}

\end{document}